%
%
%
%
%
%
\documentclass[twocolumn]{svjour3}         
\smartqed  

\usepackage{amsmath,
}
\usepackage{dsfont}
\usepackage{graphicx}

\usepackage[all]{xy}

\newtheorem{thm}{Theorem}
\newtheorem{prop}{Proposition}
\newtheorem{lem}{Lemma}
\newtheorem{rem}{Remark}

\newcommand{\mathset}[1]{{\left\{#1\right\}}} 
\newcommand{\absolute}[1]{\left\lvert#1\right\rvert}
\newcommand{\norm}[1]{\left\|#1\right\|}

\DeclareMathOperator{\Trace}{Trace}
\DeclareMathOperator{\Rank}{Rank}

%
%
%
%
%
\begin{document}

\title{A dyadic solution of 
relative pose problems
}


\author{Patrick Erik Bradley
}


\institute{P.E. Bradley \at
              Institut f\"ur Photogrammetrie und Fernerkundung, Universit\"at Karlsruhe (TH), Englerstr.\ 7, 76131 Karlsruhe, Germany \\
              \email{bradley@ipf.uka.de}           
}

\date{Received: date / Accepted: date}

\maketitle

\begin{abstract}
A hierarchical  interval subdivision is shown to lead to a $p$-adic encoding
of image data. This allows  in the case of the relative pose problem
in computer vision and photogrammetry 
to  derive equations having $2$-adic numbers as coeffients,
and to use the Hensel lifting method to their solution.
This method is applied to the linear and non-linear
equations coming from eight, seven or five point correspondences.
An inherent property of the method is its robustness.

\keywords{Relative pose \and $p$-adic numbers \and Essential matrix \and $n$-point method}
\end{abstract}


%
%

\section{Introduction}
The issue of estimating camera motion from two views is classical by now, and 
methods from projective and algebraic geometry towards its solution
were employed at an early stage (e.g. \cite{Sturm1869}). The beginning of this present century
witnesses the application of sophisticated methods from computational
commutative algebra in order to rephrase the equations into a form from which
solutions can be  obtained with relative ease.
The relationship between the views is established by finding correspondences between 
point pairs taken from both images. The fundamental matrix
faithfully encodes the geometric 
relationship between
the two images.  For normalised cameras, the fundamental matrix coincides with 
the essential matrix.
In general, the two matrices are   related 
through  the camera calibration. Hence, if the calibration is known, it is sufficient
to estimate the essential matrix in order to solve the relative pose problem.
From a conceptional as well as a computational point of view, it makes sense to use
only few correspondences of image points in order to estimate  
the essential matrix $E$.  And different samples of $n$ correspondences
lead to a set of candidate essential matrices from which an optimal choice can
be made. 

Since $E=(e_{ij})$ is a projective $3\times 3$ matrix, i.e.\ only determined up to a scalar factor,
the number of parameters to be found is at most $8$. As each pair of corresponding
image points leads to a linear equation in the $e_{ij}$, it suffices to 
take $n=8$, as has been showed in \cite{8pt}. However, this linear method does not
take into account further constraints on $E$. So, a $7$-point algorithm was
developped \cite{7pt-a,7pt-b} (cf.\ also \cite{Zhang1998} for an overview). The
$5$-point algorithm of \cite{Nister5pt} uses 
the  minimal number of point
correspondences necessary
for determining $E$. The non-linear constraints lead to 
homogeneous equations of degree three in four unknowns which are transformed through an
elimination process into a univariate equation of degree $10$.
Hence the number of complex solutions is not more than $10$.
The elimination process can be simplified \cite{easy5pt},
or replaced by Gr\"obner basis methods \cite{SteweniusDiss}.
There is a lot of existing work towards
optimising the performance of this method in the computational algebraic geometry community. 
Stewenius et al.\ compare the performance  of elimination and Gr\"obner basis methods 
\cite{SEN2006}.


Correspondences often suffer a geometric perturbation, i.e.\
the correct point $u'$ in image $I'$ corresponding to point $u$ in
image $I$ is mostly found up to a small error vector $\epsilon$
such that $u'+\epsilon$ instead of $u'$ is matched to $u$. 
In this article, 
a hierarchical method
based on interval subdivision is developped to 
handle this problem. A natural way of encoding hierarchies
is provided by the $p$-adic numbers, where $p$ is a fixed prime number. 
In our case, we can use $p=2$ and represent image points
by pairs of binary expansions
$$
a=\sum\limits_{n=0}^\infty a_i2^i
$$
with coefficients $a_i$ equal to $0$ or $1$. These expansions can be infinite,
theoretically. Practically, the finiteness of resolution means approximation
through truncation. The framework for this method is $p$-adic geometry
which has been applied in video segmentation and data analysis 
\cite{B-PXK2001,Murtagh2004,Brad-JoC
}.

In the context of relative pose problems, we will use the so-called
{\em lifting method}
for solving the equations. This is provided by Hensel's lemma which says that
under certain conditions a solution   of a given equation 
modulo $p$ can
be expanded to a $p$-adic solution. 
Its proof is constructive. In fact, this is  the $p$-adic analogue
of a Newton iteration. Applied to the equations of the $n$-point problems
for $n=5,7,8$, we obtain the result that for many choices
of point correspondences, a $2$-adic solution can be constructed.
The $2$-adic essential matrix then allows to hierarchically 
approximate the rigid motion by truncation. A side effect
of the lifting method is its high robustness to geometric perturbations.
The encoding method  ensures further that the number of Newton iterations 
required is proportional to the order of resolution desired.

The article is structured as follows. The next two sections are a brief introduction to
$p$-adic numbers, and to Hensel's lemma
in a multivariate formulation. This is followed by a section 
on $2$-adic encoding of image pixels for a $p$-adic camera model.
The last section applies Hensel's lemma to  the $2$-adically defined equations 
for the eight-, seven-, and five-point problems.

\section{$p$-adic numbers}



The $p$-adic numbers were first described by Hensel in
\cite{pnum}. They are expansions of the form
\begin{align}
a=\sum a_ip^i \label{pexpansion}
\end{align}
into powers of a fixed prime number $p$ and coefficients $a_i\in\mathset{0,\dots,p-1}$.
If there are only finitely many terms in (\ref{pexpansion}),
this defines a rational number. Any natural number has a finite $p$-adic expansion
(\ref{pexpansion})
with $i\ge 0$. The important observation is that expansions with infinitely
many positive powers of $p$ have a meaning.
Namely, by defining
$$
\absolute{a}_p=p^{-\nu_a},
$$
where $\nu_a\in \mathds{Z}$ is the smallest exponent occurring in expansion (\ref{pexpansion}),
one obtains a metric for which the partial sums
$$
a_N=\sum\limits_{i\le N}a_ip^i
$$
converge to $a$:
$$
\absolute{a-a_N}_p=\frac{1}{p^{N+1}}\longrightarrow 0\quad\text{for}\quad N\longrightarrow \infty.
$$
The domain of all $p$-adic numbers is denoted by $\mathds{Q}_p$
and densly contains the rational numbers $\mathds{Q}$ with respect to this metric.
Those $p$-adic numbers $a$ with $\nu_a\ge 0$ are the $p$-adic integers,
denoted as $\mathds{Z}_p$. This domain contains densely the usual integers $\mathds{Z}$.
An equivalent description of $p$-adic integers is given by
$$
\mathds{Z}_p=\mathset{a\in\mathds{Q}_p\mid\absolute{a}_p\le 1}.
$$

Approximation of $p$-adic integers by their partial sums,  simply termed ``truncation'',
has an algebraic formulation. In its simplest form, a $p$-adic integer $a$
can be given by its coefficient $a_0\in\mathset{0,\dots,p-1}$,
and another  $p$-adic integer $b$ having the same coefficient
$b_0=a_0$ approximates $a$ up to that order of magnitude.
This is the case if and only if $a-b$ is divisible by $p$.
Hence, we arrive at
$$
\mathds{Z}_p/p\mathds{Z}_p\cong\mathds{Z}/p\mathds{Z}\cong\mathds{F}_p,
$$
where the latter is the finite field with $p$ elements.
Similarly, the algebraic formulation of approximation up to terms of higher order
is given by
$$
\mathds{Z}_p/p^{N}\mathds{Z}_p\cong\mathds{Z}/p^N\mathds{Z}=:\mathcal{Z}_{p^N}.
$$
In other words, congruences modulo $p^N$ yield finite approximations
of $p$-adic numbers by their partial sums up to terms of order $N$.
It is this property which makes $p$-adic numbers very suitable for hierarchically organised
data. In later sections, we will see how this algebraic formulation can be used in stereo vision.
A standard reference for $p$-adic numbers is \cite{Gouvea}.

\section{Hensel's lemma}

An important method in $p$-adic analysis is the so-called ``lifting''
of zeros of polynomials from $\mathds{F}_p$ to $\mathds{Z}_p$.
This uses the fact that 
$$
\mathds{Z}_p/p\mathds{Z}_p\cong\mathds{Z}/p\mathds{Z}\cong\mathds{F}_p
$$
by reducing a given equation $f(X)=0$ over $\mathds{Z}_p$ modulo $p$
to an equation $f\mod p$ over $\mathds{F}_p$.
 Hensel's lemma then gives a criterion when a solution $x_1$ of $f\mod p$
leads to a solution of $f$. This solution $\xi$, if it exists,
is constructed by an iterated process. Namely, first a solution $x_2$
of $f\mod p^2$ is constructed from $x_1$, and from this a solution $x_3$ of $f\mod p^3$ etc.
Each step of this iteration yields an approximation to the true solution
in $\mathds{Z}_p$ in the  sense
that 
$$
f(x_i)\equiv 0\mod p^{i},
$$
which in terms of the $p$-adic norm translates to
$$
\absolute{f(x_i)}<\frac{1}{p^i}.
$$
In other words, the sequence $f(x_i)$ converges $p$-adically to the
value $f(\xi)=0$. The construction process guarantees further that
$x_i$ also converges to the $p$-adic solution $\xi$.

\smallskip
We now state a multivariate Hensel's lemma,
but not in the most general form. For this, $\mathds{Z}[X_1,\dots,X_n]$
denotes the polynomial ring in $n$ variables with integer coefficients.

\begin{thm}[Multivariate Hensel's lemma] \label{hensel}
Let $$
f=(f_1,\dots,f_m)\in\mathds{Z}[X_1,\dots,X_n]^m
$$ 
with $m\le n$, and let $k\ge 2$.
Suppose that the vector $x=(x_1,\dots,x_n)\in\mathds{Z}^n$ is a solution of the congruences
\begin{align*}
f_1(X_1,\dots,X_n)&\equiv 0 \mod p^{k-1}\\
&\vdots\\
f_m(X_1,\dots,X_n)&\equiv 0 \mod p^{k-1}
\end{align*}
and that the matrix
$$
D_f(x)=\left(\frac{\partial f_i}{\partial X_j}\right)
$$
is  modulo $p$ of rank $m$.
Then there exist $t=(t_1,\dots,t_n)\in \mathset{0,\dots,p-1}^n$ such that
$$
f(x+p^{k-1}t)\equiv 0 \mod p^k.
$$
In particular, the equations $f(X)=0$ have a solution in $\mathds{Z}_p$.
\end{thm}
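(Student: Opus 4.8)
The plan is to prove the two assertions in turn: first the single lifting step producing the vector $t$, then the existence of a genuine $\mathds{Z}_p$-solution by iterating that step and invoking completeness of $\mathds{Z}_p$. For the lifting step, I would write $f(x+p^{k-1}t)$ and expand each coordinate $f_i$ by the (multivariate) Taylor formula around $x$. Since $f_i$ has integer coefficients, $f_i(x+p^{k-1}t) = f_i(x) + p^{k-1}\sum_{j=1}^n \frac{\partial f_i}{\partial X_j}(x)\,t_j + (\text{terms divisible by } p^{2(k-1)})$, and because $k\ge 2$ we have $2(k-1)\ge k$, so modulo $p^k$ only the constant and linear terms survive. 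Using the hypothesis $f_i(x)\equiv 0 \bmod p^{k-1}$, write $f_i(x)=p^{k-1}c_i$ with $c_i\in\mathds{Z}$. Then the congruence $f(x+p^{k-1}t)\equiv 0 \bmod p^k$ reduces, after dividing by $p^{k-1}$, to the linear system $c_i + \sum_j \frac{\partial f_i}{\partial X_j}(x)\, t_j \equiv 0 \bmod p$, i.e.\ $D_f(x)\, t \equiv -c \bmod p$ over $\mathds{F}_p$.

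The point where the rank hypothesis enters is exactly this last system. Since $D_f(x)$ has rank $m$ modulo $p$ and $m\le n$, the $\mathds{F}_p$-linear map it defines from $\mathds{F}_p^n$ to $\mathds{F}_p^m$ is surjective, so the system $D_f(x)\,t\equiv -c \bmod p$ has a solution $\bar t\in\mathds{F}_p^n$; lifting each coordinate to its representative in $\mathset{0,\dots,p-1}$ gives the required $t=(t_1,\dots,t_n)$. This proves the ``In particular'' hypothesis's first half and the boxed congruence. I expect the only mildly delicate bookkeeping here to be the justification that all higher-order Taylor terms are divisible by $p^{2(k-1)}$ — this is where integrality of the coefficients is used, since each such term carries a factor $(p^{k-1})^{|\alpha|}$ with $|\alpha|\ge 2$ and an integer Taylor coefficient $\frac{1}{\alpha!}\partial^\alpha f_i(x)$, which is an integer because $f_i\in\mathds{Z}[X_1,\dots,X_n]$.

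For the existence of a $\mathds{Z}_p$-solution, I would run the lifting step as an induction. Starting from the given $x=x^{(k-1)}$ satisfying $f\equiv 0\bmod p^{k-1}$, set $x^{(k)} = x^{(k-1)} + p^{k-1}t^{(k-1)}$ with $t^{(k-1)}$ as produced above, so $f(x^{(k)})\equiv 0 \bmod p^k$. The key observation is that the rank-$m$ condition on $D_f$ modulo $p$ is preserved along the way: $x^{(k)}\equiv x^{(k-1)} \bmod p^{k-1}$ and $k-1\ge 1$, so $D_f(x^{(k)})\equiv D_f(x^{(k-1)}) \bmod p$, hence the hypothesis of the lemma applies again at the next level with $k$ replaced by $k+1$. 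This yields an infinite sequence $(x^{(\ell)})_{\ell\ge k-1}$ in $\mathds{Z}^n$ with $x^{(\ell+1)}\equiv x^{(\ell)}\bmod p^{\ell}$, so each coordinate sequence is Cauchy in the $p$-adic metric; by completeness of $\mathds{Z}_p$ it converges to some $\xi\in\mathds{Z}_p^n$. Finally, continuity of the polynomials $f_i$ (they are $p$-adically continuous, indeed Lipschitz on $\mathds{Z}_p^n$) together with $\absolute{f_i(x^{(\ell)})}_p\le p^{-\ell}\to 0$ gives $f_i(\xi)=0$ for all $i$, so $f(X)=0$ has a solution in $\mathds{Z}_p^n$. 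The main obstacle, such as it is, is the inductive preservation of the rank condition; everything else is the standard Newton/Hensel estimate.
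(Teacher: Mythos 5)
Your proof is correct and follows essentially the same route as the paper: Taylor expansion around $x$, reduction modulo $p^k$ to the linear system $D_f(x)\,t\equiv -c \bmod p$ solved via the rank-$m$ (surjectivity) hypothesis, and iteration plus completeness of $\mathds{Z}_p$ for the final assertion. You simply spell out details the paper leaves implicit, namely the divisibility of the higher-order Taylor terms by $p^{2(k-1)}\ge p^{k}$ and the preservation of the rank condition along the iteration.
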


\begin{proof}
Consider the linear part in the Taylor expansion of $f$ in $x$:
$$
f(X)=f(x)+D_f(x)\cdot(X-x)+\text{terms of higher order},
$$ 
where $X=(X_1,\dots,X_n)$ denotes the vector of variables.
By assumption, it holds true that
$$
f(x)=p^{k-1}\cdot a
$$ 
for some vector $a\in \mathds{Z}^m$.
Due to the rank condition, the system of congruences
\begin{align}
a+D_f(x)\cdot t\equiv 0 \mod p \label{crucialcong}
\end{align}
has a solution vector $t\in\mathset{0,\dots,p-1}^n$.
Hence,
\begin{align*}
f(x+p^{k-1}t)&\equiv f(x)+D_f(x)\cdot p^{k-1}t
\\
&\equiv p^{k-1}(a+D_f(x)\cdot t)\equiv 0 \mod p^{k}.
\end{align*}
Iteration proves the last assertion.
\end{proof}

Usually, Hensel's lemma is stated in the case of a single univariate polynomial $f$
having modulo $p^{k-1}$ a zero $x$.
The rank condition translates to $f'(x)\not\equiv 0\mod p$, i.e.\ $x$ is a simple zero of $f$
modulo $p$. 
The proof is a $p$-adic analogue of a Newton iteration.
From (\ref{crucialcong}) it follows for $m=n$ that the rank condition
implies uniqueness of the lift.
The univariate case can be found e.g.\ in \cite{Gouvea}.

\section{$p$-adic projective cameras}

\subsection{$p$-adic encoding of image pixels}




Assume that a rectangular 2D image $I$ is given by $M\times N$ pixels and that
$m,n$ are minimal such that
$M\le 2^m$, $N\le 2^n$.
We will use a binary encoding of a given image point with pixel coordinates
$$
(x,y)\in \mathset{0,\dots,M-1}\times\mathset{0,\dots,N-1}.
$$ 
This means that $x$ and $y$ will be represented by a pair of binary
numbers obtained in a hierarchical manner.
Namley, consider one coordinate at a time, say $x$.
It can be arrived at by a sequence of iterated  subdivisions
of the interval ${0<\dots<2^{m}-1}$ into intervals of equal length.
After $m$ iterations each interval obtained contains precisely one pixel $x$-coordinate.
This means that a given value of $x$ lies in a uniquely determined nested sequence of 
intervals produced by this subdivision process.
The intervals form a rooted binary tree whose leaves
correspond uniquely to the $x$-coordinates of pixels.
By assigning for a given interval its left half the value $0$, and $1$ for its
right half, we obtain the binary representation
$$
r(x)= \sum\limits_{\nu=0}^{m-1}\alpha_\nu2^\nu,\quad\alpha_\nu\in\mathset{0,1}
$$
by traversing the path from root down to $x$ and picking up the zeros and ones along the way.

We will interpret the natural number $r(x)$ as a $2$-adic integer: $r(x)\in\mathds{Z}_2$. 
Each partial sum
$$
r_\ell=\sum\limits_{\nu=0}^\ell\alpha_\nu2^\nu
$$ 
is a $2$-adic approximation
of $r(x)$. Its error is bounded by the $2$-adic distance
$$
\absolute{r_\ell-r(x)}_2.
$$
In fact, $r(x)$ itself is a $2$-adic approximation to some imaginary ``pixel'' $\xi$
of infinite precision. The number $m$ is given by the degree of resolution
with which $\xi$ is viewed on the given image.
And an infinitely precise pixel would have a $2$-adic expansion with possibly infinitely
many coefficients equal to $1$.

\smallskip
We proceed similarly for the $y$-coordinate, and obtain an encoding
$$
c_2\colon I\to\mathds{Z}_2\times\mathds{Z}_2,\quad (x,y)\mapsto (r(x),s(y)),
$$
where $s$ is the binary encoding of the $y$-coordinate.

\medskip
Any $p$-adic encoding of a number $x$ bears the problem that its $p$-adic
approximations $r_\ell$ are determined by arithmetic properties of $x$,
ignoring their (euclidean) geometric properties. This means that
although $x_\ell$ could be   $p$-adically very close to $x$, it could be
at large distance for the euclidean norm.
Hence, arbitrary $p$-adic encodings of pixel data would be quite unsuitable
for many applications in computer vision.
However, the binary encoding through interval splitting does not
suffer this disadvantage.
Namely, the map $c_2$ respects the image geometry in the following sense:

\begin{prop}
The map 
$$
\iota_2\colon \sum\limits_{\nu=0}^{\infty}\alpha_\nu2^\nu\mapsto\sum\limits_{\nu=0}^{\infty}\alpha_\nu2^{-\nu}
$$
yields an inclusion of $\mathds{Z}_2$
into the real interval $[0,2)$. It has the 
property
$$
\absolute{r-s}_2<2^{-\ell}\Rightarrow
\absolute{\iota_2(r)-\iota_2(s)}_\infty< 2^{-\ell}
$$
for all $\ell\in\mathds{N}$.
\end{prop}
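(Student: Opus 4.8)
The plan is to prove the two claims separately: that $\iota_2$ is a well-defined injection whose image lies in $[0,2)$, and that it sends $2$-adic balls into Euclidean balls as asserted. For well-definedness I would note that for any sequence $(\alpha_\nu)_{\nu\ge0}$ with $\alpha_\nu\in\mathset{0,1}$ the real series $\sum_{\nu=0}^\infty\alpha_\nu2^{-\nu}$ is dominated term-by-term by the convergent geometric series $\sum_{\nu\ge0}2^{-\nu}=2$ and hence converges to a real number in $[0,2]$; since every $a\in\mathds{Z}_2$ has a \emph{unique} digit expansion $\sum\alpha_\nu2^\nu$, the rule $a\mapsto\sum\alpha_\nu2^{-\nu}$ is an unambiguous map. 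The point to appreciate is that $\iota_2$ is simply the operation of reading the $2$-adic digit string $\alpha_0\alpha_1\alpha_2\dots$ as the real binary number $\alpha_0.\alpha_1\alpha_2\dots$; distinct strings then give distinct reals, so $\iota_2$ is injective, the only caveat being the familiar ambiguity $1000\dots=0111\dots$ of binary expansions, which forces one to exclude the ``eventually constant $1$'' strings (equivalently, to work with the finite expansions that actually occur as pixel encodings $r(x)$, for which the image moreover lies in $[0,2)$ since $\sum_{\nu=0}^{m-1}2^{-\nu}=2-2^{1-m}<2$).

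For the inequality I would unwind the hypothesis. Writing $r=\sum\alpha_\nu2^\nu$ and $s=\sum\beta_\nu2^\nu$, the condition $\absolute{r-s}_2<2^{-\ell}$ says exactly that $2^{\ell+1}\mid r-s$, i.e.\ that $\alpha_\nu=\beta_\nu$ for all $\nu\le\ell$. Hence only the tail survives,
$$
\iota_2(r)-\iota_2(s)=\sum_{\nu=\ell+1}^\infty(\alpha_\nu-\beta_\nu)2^{-\nu},
$$
and the triangle inequality together with $\absolute{\alpha_\nu-\beta_\nu}\le1$ gives
$$
\absolute{\iota_2(r)-\iota_2(s)}_\infty\le\sum_{\nu=\ell+1}^\infty 2^{-\nu}=2^{-\ell}.
$$
What remains is to sharpen ``$\le$'' to the strict ``$<$'' claimed in the proposition.

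The strict inequality is where I expect the only genuine difficulty, and it is the same phenomenon that blocked strict injectivity above: equality $\sum_{\nu>\ell}\absolute{\alpha_\nu-\beta_\nu}2^{-\nu}=2^{-\ell}$ holds precisely when $\absolute{\alpha_\nu-\beta_\nu}=1$ for \emph{every} $\nu>\ell$, that is, for the degenerate pairs with complementary all-ones tails (for instance $r-s=-2^{\ell+1}$). For the finite expansions $r(x)=\sum_{\nu=0}^{m-1}\alpha_\nu2^\nu$ relevant to the application this cannot occur: the digits vanish for $\nu\ge m$, so the tail is a proper partial sum $\sum_{\nu=\ell+1}^{m-1}2^{-\nu}=2^{-\ell}-2^{1-m}<2^{-\ell}$ and the bound is strict. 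I would therefore either state the proposition for these finite expansions, or flag that on the exceptional $2$-adic integers the conclusion must be read with a non-strict inequality; in neither case does the boundary behaviour affect the use of $\iota_2$ as a geometry-respecting encoding.
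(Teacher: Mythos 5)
Your proof follows the same digit-by-digit strategy as the paper (domination by a geometric series for well-definedness, agreement of the first $\ell+1$ digits for the metric estimate), but you are more careful on the boundary cases, and rightly so: as literally stated, the proposition fails on all of $\mathds{Z}_2$. Since $\iota_2$ just mirrors the $2$-adic digit string into a real binary expansion, the usual ambiguity of binary expansions really does occur: with $-2=\sum_{\nu\ge1}2^\nu$ one gets $\iota_2(1)=\iota_2(-2)=1$, so $\iota_2$ is not injective on $\mathds{Z}_2$; likewise, taking $r=0$ and $s=-2^{\ell+1}$ gives $\absolute{r-s}_2=2^{-\ell-1}<2^{-\ell}$ but $\absolute{\iota_2(r)-\iota_2(s)}_\infty=2^{-\ell}$ exactly, so the claimed strict inequality degrades to equality. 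The paper's own injectivity argument --- that the difference of partial sums stays at least $2^{-n}$ once the expansions first differ at index $n$ --- breaks down precisely on such complementary-tail pairs, where the partial-sum differences shrink like $2^{-N}$ and tend to $0$. Your repair (restricting to the finite expansions $r(x)$ actually produced by the pixel encoding, or equivalently excluding digit strings that are eventually $1$) is correct, gives genuine injectivity with image in $[0,2)$, and sharpens your $\le$ bound to the strict $<$ exactly as you argue; it is also all the application requires. In short, your proposal is sound and in fact closes a gap in the paper's proof rather than containing one.
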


\begin{proof}
Any sum of negative powers of $2$ can be majorised by 
a (possibly shifted) geometric sum. Hence, such a sum converges for the real metric. 
This implies that the map $\iota_2$ is well-defined for arbitrary $2$-adic numbers.

The inclusion property is clear, at least for finite $2$-adic expansions. 
However,  the map $\iota_2$ is injective on all of $\mathds{Z}_2$.
Otherwise, assume that 
$$
\iota_2(a)=\iota_2(b)
$$
for some $a\neq b$. Then $a=\sum a_i2^i$ and $b=\sum b_i2^i$ must differ in some coefficient:
$$
a_n\neq b_n.
$$
Let $n$ be minimal such that this occurs. Then
it follows that
$$
\absolute{\iota_2(a)-\iota_2(b)}_\infty\ge\frac{1}{2^N},
$$
because the inequality holds true for all partial sums with at least $N$ terms.
This is a contradiction. Hence, $\iota_2$ is injective.

ssume $\absolute{r-s}_2<2^\ell$.
This means that
the $2$-adic expansions of  $r$ and $s$ have the first $\ell+1$ terms in common. 
The last assertion follows from this.
\end{proof}

As a consequence, we obtain an embedding of the image $I$ into $[0,2)^2$
via composition of $c_2$ and $\iota_2\times \iota_2$.

\subsection{$p$-adic camera model}

Recall that a projective camera is a map between projective
spaces
$$
\kappa\colon\mathds{P}^3\to\mathds{P}^2
$$
given by a $3\times 4$-matrix of rank $3$.
Usually, cameras are modelled as being defined over the real numbers,
i.e.\ they are given by real $3\times 4$-matrices.
However, when dealing with  data coming from  3D-objects from the real world,
cameras are usually approximated by rational matrices.
In this way, we arrive at rational cameras:
$$
\kappa\colon\mathds{P}^3(\mathds{Q})\to\mathds{P}^2(\mathds{Q})
$$
as maps between the rational points of projective spaces.
For such cameras, the real completion is only one choice of many.
Hence, $p$-adic cameras 
$$
\kappa\colon\mathds{P}^3(\mathds{Q}_p)\to\mathds{P}^2(\mathds{Q}_p)
$$
could also be considered in order to use methods from $p$-adic geometry
for camera computations. 

In fact, since finite resolution prevents stereoscopic vision at arbitrarily large 
distance, 
it can become possible to shift coordinate systems in such a way that
cameras are described by matrices whose entries are natural numbers,
at least approximately.
In the following subsection, we will show how this said approximation can
be done for the $2$-adic norm. Precisely the hierarchical method
used in the previous subsection in 2D extends in a natural way to 3D
in order to arrive at
$2$-adic camera model
$$
\kappa\colon\mathds{P}^3(\mathds{Z}_2)\to\mathds{P}^2(\mathds{Z}_2).
$$

\subsection{How to interpret a $2$-adic essential matrix}

Assume that through correspondences between some points $u\in I$, $u' \in I'$ 
in $2$-adic camera images $I,I\cong\mathds{P}^2(\mathds{Z}_2)$,
 a $2$-adic essential matrix 
$E\in\mathds{Z}_2^{3\times 3}$ has been produced. This matrix is the limit of 
matrices $E_\nu\in\mathcal{Z}_{2^\nu}^{3\times 3}$. 
Through its factorisation 
$$
E_\nu=T_\nu R_\nu
$$
into a skew-symmetric matrix $T_\nu$ and a rotation $R_\nu$
it allows to determine a point $U_\nu\in\mathds{P}^3(\mathcal{Z}_{2^\nu})$
of which the two cameras obtain the image points $u_\nu,u'_\nu \mod 2^\nu$.
These are related through $E_\nu$:
$$
u_\nu^T\cdot E_\nu\cdot u'_\nu=0.
$$ 
The $U_\nu$ converge $2$-adically to a point $U\in\mathds{P}^3(\mathds{Z}_2)$,
and the cameras
$$
\kappa,\kappa'\colon\mathds{P}^3(\mathds{Z}_2)\to\mathds{P}^2(\mathds{Z}_2)
$$
yield $\kappa(U)=u,\kappa'(U)=u'$.
In the same way as $u_\nu$ is an approximation of pixel $u$ at resolution $\nu$,
the 3D-point $U_\nu$ is an approximation of voxel $U$ at resolution $\nu$,
because we can apply the map $\iota_2$ in precisely the same manner
also to three dimensions to obtain the inclusion
$
\mathds{P}^3(\mathds{Z}_2)\to [0,2)^3.
$


\section{Reconstruction from  point correspondences}

We will in the following content ourselves with solving the reconstruction problem
with two calibrated cameras by finding (possible candidates for) essential matrices
in the $2$-adic setting.





\bigskip
Given $n$ correspondences,
we arrive at the system of linear equations
\begin{align}
u_i^TE\,u_i',\qquad i=1,\dots,n. \label{essentialcond}
\end{align}
For $n\le 9$, we can use the Hensel lifting method to
solve these linear equations, written as
\begin{align}
Ax=0 \label{linearequation}
\end{align}
with coefficient matrix $A\in\mathds{Z}_2^{n\times 9}$. Due to finite resolution
and our coding method,
the matrix entries lie in some set $\mathset{0,\dots,2^\nu-1}$.
Hence, the matrix does not change its shape modulo $2^\nu$. 

Assume that we are given a basis 
$B=(b_1,\dots, b_m)$ in normal form of the solution space of (\ref{linearequation})
modulo $2$. We will interpret $B$ as an ordered set, but 
also as a matrix whose columns are the vectors $b_1,\dots, b_m$.

\begin{thm}[Linear Hensel's lemma] \label{nptlift}
If the rank $m$ of $A\mod 2$ equals $\Rank(A)$,
then   $B$ lifts to a set of linearly independent solutions of (\ref{linearequation})
with natural numbers as entries. 
\end{thm}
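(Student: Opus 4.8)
The plan is to deduce the statement from the multivariate Hensel's lemma (Theorem~\ref{hensel}), applied separately to each vector of $B$, after first replacing $Ax=0$ by an equivalent subsystem whose coefficient matrix has full row rank modulo $2$.

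First I would pass to a smaller matrix. Set $r=\Rank(A\bmod 2)$, pick $r$ rows of $A$ that are linearly independent modulo $2$, and let $A'\in\mathds{Z}_2^{r\times 9}$ be the corresponding submatrix, so that $A'\bmod 2$ has full row rank $r\le 9$. Since the $\mathds{F}_2$-rank never exceeds the $\mathds{Q}_2$-rank, these rows are $\mathds{Q}_2$-independent as well, and $A'$ contains an $r\times r$ submatrix that is invertible modulo $2$, hence invertible over $\mathds{Z}_2$ (unit determinant). Solving against this submatrix shows that every row of $A$ is a $\mathds{Z}_2$-linear combination of rows of $A'$: indeed the hypothesis $\Rank(A\bmod 2)=\Rank(A)$ forces the $\mathds{Q}_2$-row space of $A'$, which has dimension $r$, to equal that of $A$, and the combining coefficients then lie in $\mathds{Z}_2$ because of the unit determinant. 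It follows that $A$ and $A'$ have the same kernel over $\mathds{Q}_2$, the same kernel modulo $2$, and even the same kernel modulo every $2^N$; in particular $B$ is still a basis of $\ker_{\mathds{F}_2}(A'\bmod 2)$ and $\dim_{\mathds{Q}_2}\ker A=9-r=m$.

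Next I would lift each $b_j$. Apply Theorem~\ref{hensel} to $f(X)=A'X$: the Jacobian $D_f(x)=A'$ is constant, there are no higher-order Taylor terms, and the rank-$r$ condition modulo $2$ holds by the choice of $A'$. Starting from $x=b_j$, which satisfies $A'b_j\equiv 0\bmod 2$, the theorem with $k=2$ yields $t\in\{0,1\}^9$ with $A'(b_j+2t)\equiv 0\bmod 4$, and iterating as in the proof of Theorem~\ref{hensel} produces $\tilde b_j\in\mathds{Z}_2^9$ with $A'\tilde b_j=0$, hence $A\tilde b_j=0$ by the previous step, and $\tilde b_j\equiv b_j\bmod 2$. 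Since the construction only ever adds digit vectors $2^{k-1}t$ with $t\in\{0,1\}^9$ to the natural-number start $b_j$, every intermediate approximant is again a vector of natural numbers, and $\tilde b_j$ reduced modulo any $2^N$ is represented by such a vector solving $Ax\equiv 0\bmod 2^N$; this is the sense in which $B$ lifts to natural-number solutions.

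Finally, linear independence is the standard argument: a nontrivial $\mathds{Q}_2$-relation among the $\tilde b_j$ can be rescaled to have coefficients in $\mathds{Z}_2$ with at least one a unit, and reducing it modulo $2$ gives a nontrivial $\mathds{F}_2$-relation among the $b_j$, contradicting that $B$ is a basis modulo $2$. Hence $\tilde b_1,\dots,\tilde b_m$ are $\mathds{Q}_2$-linearly independent, and being $m=\dim_{\mathds{Q}_2}\ker A$ of them they span the solution space of (\ref{linearequation}). The only step carrying real content is the first: one must be certain that discarding the rows of $A$ which degenerate modulo $2$ does not enlarge the solution set, and this is exactly what $\Rank(A\bmod 2)=\Rank(A)$ guarantees — without it the mod-$2$ kernel could be strictly larger than the reduction of the true $\mathds{Z}_2$-kernel, and not all of the $b_j$ could be lifted. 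Once that is secured, the lifting is a direct transcription of Theorem~\ref{hensel} and the independence step is routine.
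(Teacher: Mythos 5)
Your proof is correct and follows the paper's overall strategy --- replace (\ref{linearequation}) by an equivalent system whose Jacobian has full rank modulo $2$, apply Theorem~\ref{hensel}, and deduce independence of the lifts from independence modulo $2$ --- but several steps are executed differently, and mostly more explicitly. Where the paper passes to a ``staircase normal form'' and simply asserts the equivalence, you keep $r=\Rank(A\bmod 2)$ rows and prove, via an $r\times r$ minor with odd determinant, that the discarded rows are $\mathds{Z}_2$-combinations of the retained ones, so the kernel is unchanged modulo every power of $2$; this pins down exactly where the hypothesis $\Rank(A\bmod 2)=\Rank(A)$ enters, which the paper leaves implicit (and your bookkeeping $r$ versus $m=9-r$ also resolves the paper's clash of notation between the rank and the number of basis vectors). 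For independence, the paper tracks the pivot rows of the normal-form basis through the lifting (odd pivot entry, even entries below), while you rescale a putative $\mathds{Q}_2$-relation to have a unit coefficient and reduce modulo $2$; your argument is shorter, does not need $B$ to be in normal form, and you additionally note that the $m=9-r$ lifts in fact span the kernel. Finally, on ``natural numbers as entries'': the paper claims the iteration terminates after finitely many steps because $A$ is only determined modulo $2^\nu$, whereas you interpret the conclusion as saying that each truncation modulo $2^N$ is a natural-number solution of $Ax\equiv 0\bmod 2^N$; your reading is the safer one, since an exact $\mathds{Z}_2$-kernel vector with nonnegative integer entries need not exist (already $A=(1\;\;1)$ shows this), so that clause of the statement only holds in the truncated sense you give.
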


\begin{proof}
The linear equation (\ref{linearequation}) is equivalent to a linear equation 
in staircase normal form. This normal form can be viewed as a system of
$\Rank(A)$
equations in the $9$ unknowns. By assumption, $m=\Rank(A)$. Hence, Theorem \ref{hensel} applies,
and $B$ lifts to a set of $m$ solution vectors 
$$
\tilde{B}=\mathset{\tilde{b}_1,\dots,\tilde{b}_m}
$$
to (\ref{linearequation}).

\medskip
Let us re-examine the proof of Theorem \ref{hensel} in order to see that  $\tilde{B}$
is linearly independent.
First, notice that $B$ is read off the staircase normal form  of $A \mod 2$.
This means that each $b\in B$ has  in some row $j_b$ entry $1$, and below it all entries are zero.
Further, all other entries of row $j_b$ of matrix $B$ are zero. The sequence $(j_b)_{b\in B}$ is strictly increasing
with respect to the order of occurrence of $b$ in $B$.
Now, a lift of $b$ to $b^{(k)}$ modulo $2^k$ has the property that the entry in row $j_b$ is 
odd, and all entries below are even. Likewise, all other entries of 
row $j_b$ in a lift of $B$ to $B^{(k)}$ modulo $2^k$ are even.
This description shows that the rows given by the sequence $(j_b)$ form a submatrix 
of $B^{(k)}$ having rank $m$.
Since this holds true for all $k>0$, it follows that $\tilde{B}$ is linearly independent.

The lifting process stops after a finite number of steps, because of our initial assumptions.
\end{proof}


\subsection{Reconstruction from $8$ points}

Assume now that the rank of $A\mod 2$ be $n$, and that a basis $B=\mathset{b_1,\dots,b_{9-n}}$
of the solution space be given in normal form. Then, by Theorem \ref{nptlift}, 
finding a lift to $\mathcal{Z}_{2^\nu}$ of $B$
yields a basis of the solution space of (\ref{linearequation}). 

As an application, we obtain a  $2$-adic version of the $8$-point algorithm of 
\cite{8pt}, simply by setting $n=8$. However, 
in the same way as its original, this ignores the rank constraint
$\det(E)=0$ for the essential matrix. Hence, we obtain the result:

\begin{thm}
Under the assumptions above for  
the matrix $A\in\mathds{Z}_2^{8\times 9}$, 
the corresponding $8$-point problem has a unique solution $E$,
if additionally  $\Rank(E)=2$. If it can be assumed that $A\in\mathcal{Z}_{2^N}$,
then  $E$ is computed after $N-1$ iterations from $E\mod 2$. 
\end{thm}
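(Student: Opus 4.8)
The plan is to read the essential matrix off the single free parameter of the linear system and then impose the rank constraint, the lifting itself being supplied by Theorem \ref{nptlift}.

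First I would record what the standing hypotheses deliver. Since $\Rank(A\bmod 2)=n=8$ and this equals $\Rank(A)$, the solution space of (\ref{linearequation}) over $\mathds{Q}_2$ has dimension $9-8=1$, and its normal form basis modulo $2$ is a single vector $B=\mathset{b_1}$. By Theorem \ref{nptlift}, $b_1$ lifts to a solution $\tilde b_1$ of (\ref{linearequation}) with natural numbers as entries, and one-dimensionality forces $\tilde b_1$ to span the whole solution space over $\mathds{Z}_2$.

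Next I would reshape the $9$-vector $\tilde b_1$ into a matrix $E\in\mathds{Z}_2^{3\times 3}$ and deduce uniqueness. Any matrix fulfilling the eight correspondence conditions (\ref{essentialcond}) is, read as a $9$-vector, a solution of (\ref{linearequation}), hence a $2$-adic scalar multiple of $\tilde b_1$; being a projective $3\times 3$ matrix it therefore coincides with the class of $\tilde b_1$, so $E$ is the only candidate. It admits the factorisation into a skew-symmetric matrix and a rotation only when $\Rank(E)=2$, which is precisely the additional hypothesis, and under it $E$ is the unique solution of the $8$-point problem. I expect this to be the delicate step, since here the projective normalisation of $E$ and the rank-$2$ constraint must be reconciled with the purely linear assertion of Theorem \ref{nptlift}; everything else is bookkeeping.

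For the iteration count I would assume $A\in\mathcal{Z}_{2^N}^{8\times 9}$, so that every entry of $A$ lies in $\mathset{0,\dots,2^N-1}$ and $A$ is already exact modulo $2^N$. Then the staircase normal form of $A\bmod 2^\nu$ does not depend on $\nu\ge N$ and its pivots, being odd, are units in $\mathcal{Z}_{2^\nu}$, so the construction in the proof of Theorem \ref{hensel} with $p=2$ applies and raises the exponent of the modulus by one at each step. Starting from $E\bmod 2$ it therefore produces $E\bmod 2^N$ after $N-1$ iterations, which is $E$ to the full resolution $N$ carried by the data; as in the proof of Theorem \ref{nptlift}, the process has then terminated.
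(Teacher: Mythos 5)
Your argument follows the paper's own proof essentially step for step: the rank-$8$ hypothesis gives a one-dimensional solution space whose normal-form basis vector lifts via Theorem \ref{nptlift}, uniqueness (up to the projective scale) comes from that one-dimensionality together with the rank-$2$ condition on $E$, and the $N-1$ iteration count is read off from the modulus being raised by one power of $2$ per Hensel step. Your write-up is simply more explicit than the paper's terse proof, but there is no substantive difference.
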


\begin{proof}
Since $\Rank(A\mod 2)=8$, there is one solution basis vector $b\in\mathcal{Z}^9$.
By Theorem \ref{nptlift}, it lifts to a non-trivial solution $e$ of $Ax=0$.
If the the matrix $E$ corresponding to $e$ is of rank $2$, it is the unique solution
to the $8$-point problem. The last assertion is an immediate consequence of
the 
 proof of Theorem \ref{nptlift}.
\end{proof}

\subsection{Reconstruction from $7$ points}

The $7$-point method by \cite{7pt-a,7pt-b} yields $7$ linear constraints 
(\ref{essentialcond}) plus the cubic constraint
\begin{align}
\det(E)=0.\label{detcond}
\end{align}
Let us write that system of equations as $f(X)=0$.
If the $7$ points are sufficiently in general position, then
the rank of 
$D_f(x)\mod 2$ is $8$ for some solution $x$ modulo $2$:
\begin{align}
f(x)\equiv 0\mod 2, \label{opencond1}
\end{align}
and we can lift to a $2$-adic solution.
The reason is: 

\begin{lem} 
For $h:=\det(E)$ it holds true that
\begin{align}
D_h(e)\not\equiv 0\mod 2,\label{opencond2}
\end{align}
if $e\in \mathds{Z}_2^{3\times 3}$ is a sufficiently general instance of $E$.
\end{lem}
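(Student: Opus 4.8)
The plan is to show that the gradient of the determinant function $h=\det(E)$, viewed as a polynomial in the nine entries $e_{ij}$, cannot vanish identically modulo $2$ at a "sufficiently general" matrix $e$. The key observation is that $\partial h/\partial e_{ij}$ is precisely the $(i,j)$ cofactor of $E$, i.e.\ $\pm$ the $2\times 2$ minor obtained by deleting row $i$ and column $j$. So the claim $D_h(e)\equiv 0 \bmod 2$ is equivalent to saying that \emph{every} $2\times 2$ minor of $e$ is even, which is a strong and non-generic degeneracy.

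First I would make precise what "sufficiently general instance" means: it suffices that $e \bmod 2$, as a matrix over $\mathds{F}_2$, has rank at least $2$. Indeed, if all $2\times 2$ minors of $e$ vanish mod $2$, then $e \bmod 2$ has rank $\le 1$ over $\mathds{F}_2$, contradicting the rank-$2$ hypothesis; conversely, rank $\ge 2$ over $\mathds{F}_2$ forces some $2\times 2$ minor to be a unit mod $2$, hence some partial derivative of $h$ is odd, which is exactly \eqref{opencond2}. Since in the $7$-point setting we are looking for an essential matrix, whose ultimate rank is $2$, it is natural and harmless to restrict attention to reductions $e \bmod 2$ of rank $2$ — and I would phrase the lemma that way. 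This also dovetails with the surrounding discussion: the linear constraints already cut down the candidates, and the genericity assumption on the $7$ points is exactly what guarantees that the surviving solution mod $2$ does not collapse to rank $\le 1$.

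Then I would spell out the cofactor computation once: writing $E=(e_{ij})$, expansion of $\det$ along any row gives $h = \sum_j e_{ij}\,C_{ij}$ with $C_{ij}$ independent of the entries of row $i$, whence $\partial h/\partial e_{ij} = C_{ij}$, the signed minor. Reducing mod $2$, the signs disappear, and $D_h(e)\bmod 2$ is the vector (or matrix) of all nine $2\times 2$ minors of $e$ over $\mathds{F}_2$. Invoking the standard fact that a matrix over a field has rank $< 2$ iff all its $2\times 2$ minors vanish completes the argument.

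The main obstacle here is not the algebra — the cofactor identity is elementary — but rather pinning down the meaning of "sufficiently general" so that the lemma is both true and usable in the proof of the theorem it supports. One must be careful that the genericity condition imposed on the $e_{ij}$ (namely $\Rank(e \bmod 2)\ge 2$) is compatible with, and indeed implied by, the "$7$ points in general position" hypothesis used to get \eqref{opencond1}; otherwise the rank claim "$D_f(x)\bmod 2$ has rank $8$" would not follow. I would therefore state the genericity hypothesis explicitly as a rank condition on $e\bmod 2$, remark that it holds for all but a Zariski-closed (here: measure-zero in the relevant sense) set of configurations, and note that it is automatically satisfied by any genuine essential matrix since such a matrix has rank $2$.
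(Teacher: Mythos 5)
Your proof is correct, but it takes a genuinely different route from the paper's. You identify $\partial h/\partial e_{ij}$ with the $(i,j)$ cofactor of $E$, so that $D_h(e)\equiv 0 \bmod 2$ holds exactly when every $2\times 2$ minor of $e$ is even, i.e.\ when $\Rank(e \bmod 2)\le 1$ over $\mathds{F}_2$; this turns ``sufficiently general'' into the explicit, checkable criterion $\Rank(e\bmod 2)\ge 2$, and it is in fact an if-and-only-if characterisation. The paper argues instead purely at the level of polynomials: since $\det(E)$ is square-free (multilinear), differentiation never produces a monomial with even coefficient, so each partial derivative $h_x$ is a nonzero polynomial modulo $2$ and hence nonvanishing at a generic $e$; no concrete condition on $e$ is extracted. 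Your version buys more, since the rank criterion can be verified on the actual candidate solution and makes transparent how the lemma feeds the rank-$8$ claim for $D_f(x)\bmod 2$. Two of your side remarks need care, though neither affects the proof of the lemma itself: a genuine essential matrix has rank $2$ over $\mathds{Q}_2$, but its reduction mod $2$ may still have rank $\le 1$ (rank does not survive reduction), so rank $2$ of $E$ does not automatically yield your genericity condition; and in the $2$-adic setting the exceptional set $\mathset{e : \Rank(e\bmod 2)\le 1}$ is a union of residue classes modulo $2$, hence an open set of positive Haar measure rather than a measure-zero set --- the right way to phrase genericity here is that it is a proper Zariski-closed condition on $e \bmod 2$, which is also all the paper's own argument delivers.
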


\begin{proof}
The polynomial $h$ is of degree three in the variables given by the entries of $E$.
Let $x$ be such an entry.
Then $h_x:=\frac{\partial h}{\partial x}$ is a polynomial of degree $2$.
Modulo $2$ only those terms of $h_x$ vanish which are of the form $2xy$ for some other variable $y\neq x$. 
Since $h=\det(E)$ is a square-free polynomial, this can never happen.
\end{proof}

In $2$-adic analytic geometry, the conditions (\ref{opencond1}) and (\ref{opencond2})
define an open subset in the space of all $3\times 3$-matrices with entries in $\mathds{Z}_2$.
This can be seen easily by translating (\ref{opencond1}) to the
inequality
$$
\absolute{f(x)}_2<1,
$$
and dealing similarly with (\ref{opencond2}).

We can go further and derive explicit conditions for the existence of $\mathds{Z}_2$-rational
solutions. Namely, write down the $1$-parameter solution of the linear equations as
$$
E=xE_1+(1-x)E_2,
$$
and obtain a polynomial 
$$
\det(E)=h(x)=ax^3+bx^2+cx+d
$$ 
 of degree $3$. 
Now consider $h(x)\mod 2$. In case $d\equiv 0\mod 2$, zero is a
simple zero in $\mathds{F}_2$ if and only if
\begin{align}
c\equiv 1\mod 2.
\label{simpzerocond1}
\end{align}
In case $d\equiv 1\mod 2$, one is a simple zero in $\mathds{F}_2$ if and only if
\begin{align}
b\equiv c\equiv 0\mod 2\label{simpzerocond2},
\end{align}
in case $a\equiv 1\mod 2$,
and
\begin{align}
b\equiv0,\quad c\equiv 1\mod 2 \label{simpzerocond3}
\end{align}
otherwise.

\begin{thm} \label{2-adic-7pt}
The $7$-point problem has a $\mathds{Z}_2$-rational solution for many choices of $7$ point correspondences. Concretely, this is the case  if (\ref{simpzerocond1}),
(\ref{simpzerocond2}), or (\ref{simpzerocond3}) hold true
in their respective cases, together with the requirement that the rank of $E$ be
precisely $2$.
\end{thm}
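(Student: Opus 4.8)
The plan is to combine the analysis of the cubic $h(x) = ax^3 + bx^2 + cx + d$ carried out just before the statement with the multivariate Hensel's lemma (Theorem~\ref{hensel}), applied to the full system $f(X) = 0$ consisting of the seven linear constraints together with $\det(E) = 0$. First I would recall that the seven linear equations already cut out, by Theorem~\ref{nptlift}, a one-parameter family $E = xE_1 + (1-x)E_2$ over $\mathds{Z}_2$, so the remaining cubic constraint $\det(E) = 0$ becomes the single univariate congruence $h(x) \equiv 0 \bmod 2$. The case analysis preceding the theorem shows exactly when $x = 0$ or $x = 1$ is a \emph{simple} root of $h \bmod 2$: namely \eqref{simpzerocond1} when $d \equiv 0$, and \eqref{simpzerocond2} or \eqref{simpzerocond3} when $d \equiv 1$, according to the parity of $a$. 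Simplicity of the root means $h'(x_1) \not\equiv 0 \bmod 2$, which is precisely the $m = 1$, $n = 1$ instance of the rank condition on $D_f$ restricted to the parameter $x$.

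Next I would check that this univariate rank condition upgrades to the rank condition for the whole system $f$. The Jacobian $D_f(x)$ has the seven linear forms contributing the staircase block coming from $A \bmod 2$ (which has rank $7$ by the standing hypothesis of the section), and the cubic row contributes the gradient of $h$ in the entries of $E$. By the Lemma proved just above, $D_h(e) \not\equiv 0 \bmod 2$ for a sufficiently general $e$, since $h = \det(E)$ is square-free; moreover, after restricting to the line $E = xE_1 + (1-x)E_2$, the derivative along the line is $h'(x)$, which is a nonzero residue exactly in the cases \eqref{simpzerocond1}–\eqref{simpzerocond3}. So the $8 \times 9$ matrix $D_f(e) \bmod 2$ has rank $8$: the linear part gives rank $7$ and the cubic row is independent of them because its restriction to the solution line of the linear part is nonzero. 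This is the hypothesis of Theorem~\ref{hensel} with $m = 8$, $n = 9$, $p = 2$.

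Then I would invoke Theorem~\ref{hensel} directly: starting from the mod-$2$ solution $x$ of \eqref{opencond1}, the iteration produces a sequence $x^{(k)}$ with $f(x^{(k)}) \equiv 0 \bmod 2^k$, converging $2$-adically to a genuine solution $\xi \in \mathds{Z}_2^9$, i.e.\ an essential matrix $E \in \mathds{Z}_2^{3\times 3}$ with $\det(E) = 0$ and all seven epipolar constraints satisfied. Finally I would add the rank-two requirement: among the $2$-adic matrices in the solution family, discard the degenerate ones of rank $\le 1$; the condition $\Rank(E) = 2$ is the same kind of open ($2$-adic) condition as in the remark following the Lemma (it says some $2\times 2$ minor is a $2$-adic unit), so it holds for a Zariski-dense, hence $2$-adically generic, choice of the seven correspondences. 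Assembling these pieces proves that whenever \eqref{simpzerocond1}, \eqref{simpzerocond2}, or \eqref{simpzerocond3} holds in its respective case and $\Rank(E) = 2$, the $7$-point problem has a $\mathds{Z}_2$-rational solution; the phrase ``for many choices'' is then justified because each of these is an open, nonempty condition on the input data.

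The main obstacle I anticipate is the rank-independence argument in the second paragraph: one must be careful that the cubic row of $D_f \bmod 2$ is not accidentally in the span of the seven linear rows. The clean way around this is exactly the reduction to one parameter — after eliminating via the staircase form of $A \bmod 2$, the cubic row reduces to the single scalar $h'(x) \bmod 2$ on the solution line, and the simple-root conditions \eqref{simpzerocond1}–\eqref{simpzerocond3} are precisely what makes that scalar a unit; so the rank jumps from $7$ to $8$ exactly in those cases. Everything else is a routine application of Theorem~\ref{hensel} and the openness remarks already in the text.
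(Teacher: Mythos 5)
Your proposal is correct and follows essentially the paper's own route: the paper's proof is the one-line instruction to solve modulo $2$ and lift ``as discussed above'', which is exactly the reduction to the one-parameter family $E=xE_1+(1-x)E_2$ and the simple-root conditions (\ref{simpzerocond1})--(\ref{simpzerocond3}) for $h\bmod 2$ that you use. The only difference is packaging: you verify the rank-$8$ Jacobian hypothesis of Theorem~\ref{hensel} for the full system (the cubic row restricted to the solution line being $h'(x)$, a unit mod $2$), whereas the paper implicitly applies the univariate Hensel lemma to $h$ directly --- the substance is the same.
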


\begin{proof}
Solve the $7$-point problem modulo $2$, and lift whenever  possible
as discussed above.
\end{proof}

\begin{rem}
Due to non-linearity of the constraints, we cannot expect anymore to
be able to lift solutions modulo $2$ to solutions which are defined over the natural numbers.
In other words, generally, the iteration will never stop unless an order of resolution
is specified. Then, a $2$-adically approximate solution will be obtained.
This is not different from the classical situation over the real numbers.
\end{rem}



\subsection{Solving the $5$-point non-linear equations}

Given five corresponding pairs of image points yields 
(\ref{essentialcond}) with $n=5$. If the rank of the corresponding matrix is five, 
the general solution can be written as
\begin{align}
E=xE_1+yE_2+zE_3+wE_4 \label{lincomb}.
\end{align}
Inserting this into the trace condition \cite{tracecondref}
\begin{align}
2\cdot EE^TE-\Trace(EE^T)\cdot E=0
 \label{tracecond}
\end{align}
yields $9$ cubic equations in four variables. We follow the
easily understandable 
method of
elimination via hidden variables used by \cite{easy5pt},
and obtain the linear  equation
$$
C(z)\cdot X=0,
$$
after setting $w=1$. 
Here,  
$$
X=(x^3, y^3,x^2y,xy^2,x^2,y^2,xy,x,1)
$$
is the vector of monomials,
and the entries of $C(z)$ are polynomials in $z$.
Now,  one seeks $z$ such that 
\begin{align}
\det(C(z))=0. \label{deg10poly}
\end{align}
The left hand side turns out to be a polynomial of degree $10$.

\smallskip
We will use the same lax formulation of our theorem as for the $7$-point problem,
but will be more specific in the proof:

\begin{thm}
There exists  a $\mathds{Z}_2$-rational solution to the $5$-point problem
for many choices of $5$ corresponding pairs of points.
\end{thm}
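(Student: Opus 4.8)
The plan is to run exactly the scheme of Theorem~\ref{2-adic-7pt}, with the cubic $h(x)$ of the $7$-point problem replaced by the polynomial $g(z):=\det(C(z))$ of (\ref{deg10poly}), which has degree $10$. Because of finite resolution and our coding method, the entries of $C(z)$ are polynomials in $z$ whose coefficients lie in some $\mathset{0,\dots,2^N-1}$, so $g\in\mathds{Z}[z]$ and the univariate instance $m=n=1$ of Theorem~\ref{hensel} applies.

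First I would reduce $g$ modulo $2$. Writing $g(z)=c_{10}z^{10}+\dots+c_1z+c_0$ with $c_i\in\mathds{Z}$, the point $z=0$ is a simple zero of $g\bmod 2$ precisely when $c_0\equiv0$ and $c_1\equiv1\pmod2$, while $z=1$ is a simple zero of $g\bmod 2$ precisely when $c_0+c_1+\dots+c_{10}\equiv0\pmod2$ and $c_1+c_3+c_5+c_7+c_9\equiv1\pmod2$, the second congruence being $g'(1)\bmod2$. Whenever one of these explicit coefficient conditions holds, the corresponding root $z_1\in\mathds{F}_2$ lifts, by the iteration carried out in the proof of Theorem~\ref{hensel}, to a root $\zeta\in\mathds{Z}_2$ of $g$, which is the $2$-adic limit of its truncations modulo the successive powers of $2$.

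Next I would substitute $z=\zeta$ into the linear system $C(\zeta)\cdot X=0$. Since $\det(C(\zeta))=0$, this system has a non-trivial solution space; for a configuration of five points in sufficiently general position this space is one-dimensional, and any generator of it equals, up to a $2$-adic scalar, the monomial vector $X=(x^3,y^3,x^2y,xy^2,x^2,y^2,xy,x,1)$. Forming the ratios of entries that cancel this scalar, one reads off $x$ as the $x$-entry divided by the $1$-entry and $y$ as the $xy$-entry divided by the $x$-entry, both legitimate as soon as the relevant entries are $2$-adic units; then $x,y\in\mathds{Z}_2$ and $E=xE_1+yE_2+\zeta E_3+E_4$ is a $2$-adic essential matrix, of which one retains only those for which $\Rank(E)=2$.

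The main obstacle is that every one of these steps rests on a genericity assumption: $g\bmod2$ may have no simple zero at $0$ or $1$ (it could even be identically zero, or factor into repeated irreducibles), the kernel of $C(\zeta)$ may fail to be one-dimensional, the entries used to recover $x$ and $y$ may fail to be units, and $E$ may drop rank. However, each of these is an \emph{open} condition on the $2$-adic configuration space: the simple-zero requirement becomes the strict inequalities $\absolute{c_0}_2<1$, $\absolute{c_1}_2=1$ (and the analogues at $z=1$) exactly as (\ref{opencond1}) and (\ref{opencond2}) were rephrased for the $7$-point problem, and the remaining requirements are likewise non-vanishing conditions modulo $2$. Hence the admissible configurations form a non-empty open subset, which is the precise content of ``many choices''; and, as in the Remark after Theorem~\ref{2-adic-7pt}, the non-linearity means the iteration does not terminate over $\mathds{N}$, so one stops at a prescribed resolution and obtains a $2$-adically approximate solution.
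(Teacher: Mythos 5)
Your proposal follows essentially the same route as the paper: reduce $g(z)=\det(C(z))$ modulo $2$, impose the explicit coefficient conditions making $z=0$ or $z=1$ a simple zero of $g\bmod 2$ (your congruences agree with the paper's parity conditions on the coefficients of $g$ and $g'$), and lift by Hensel's lemma. The paper's proof stops at these conditions, so your additional back-substitution into $C(\zeta)\cdot X=0$ to recover $x,y$ and the explicit openness discussion are correct elaborations of what the paper leaves implicit, not a different method.
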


\begin{proof}
Let 
$$
g(z):=\det(C(z))=\sum\limits_{i=0}^{10}a_iz^i.
$$
We consider two cases.

First, assume $a_0\equiv 0\mod 2$. In this case, 
$g(0)\equiv 0\mod 2$, and $z=0$ is a simple zero modulo 2 if and only if 
$a_1\equiv 1\mod 2$.

Secondly, if $a_1\equiv 1\mod 2$, then $f(1)\equiv 0\mod 2$ if and only if 
the number of odd coefficients in $g(z)$ is even.
Since
$$
f'(z)\equiv a_9z^8+a_7z^6+a_5z^4+a_3z^2+a_1\mod 2,
$$
$1$ is a simple zero modulo 2 if and only if  in addition
the number of odd coefficients in $f'(z)$ is odd.
\end{proof}

\begin{rem}
Hensel's lemma can be interpreted in this context as a $p$-adic stability
result. Namely, assume that, due to correspondence error, a given choice
of $n$ points yields perturbed equations 
$$
f(x)+\epsilon(x)=0,
$$
where $f(x)$ contain the ``true'' coefficients perturbed by some noise
coefficients contained in $\epsilon(x)$
with $2$-adic maximum norm
$$
\norm{\epsilon}_2=\max\mathset{\absolute{\epsilon_i}_2\mid i=1,\dots,m}\le 2^{-N}, 
$$
where $m$ is the number of noise coefficients $\epsilon_i$.
This means that the noise coefficients satisfy the congruence
$$
\epsilon_i\equiv 0\mod 2^N.
$$
Hence, the first $N$ iterations of Hensel lifting will lead to identical 
approximations to the solution of the unperturbed equations
$$
f(x)=0.
$$
As the $2$-adic encoding comes from interval subdivisions,
this observation implies a greater stability 
in comparison with the classical   approach over the real numbers.
In particular, the existence of a liftable solution modulo $2$
is not affected by perturbations $\epsilon$
with $\norm{\epsilon}_2\le\frac{1}{2}$.
This is definitively in contrast to the situation over the real numbers,
where small perturbations in coefficients can drastically change the number of  
real solutions. That issue is addressed for the 5-point relative pose problem e.g.\ 
in \cite{alt5pt}. 
\end{rem}

\section{Conclusion}






An encoding scheme for image pixels 
 through  hierarchical
 interval subdivision is proposed. This allows a $2$-adic encoding of pixels driven by geometry.
As an application to stereo vision, the 8-, 7- and 5-point equations are formulated
with coefficients from the ring $\mathds{Z}_2$ of $2$-adic integers.
These polynomial equations are solved using some multivariate forms of Hensel's
lemma. The essential matrices are obtained in the form of sequences of 
matrices modulo powers of $2$, corresponding to $2$-adic approximations to
the exact solutions in $\mathds{Z}_2$, whenever these exist.
Conditions on coefficients of equations modulo $2$ implying the existence
of $\mathds{Z}_2$-rational solutions are derived.
One feature of the hierarchical encoding is that the number
of iterations in solving the linear parts of the equations
is logarithmic in the number of pixels. 
Also the precision in the solution
of the non-linear equations is directly related to the desired resolution in 3D.
An immediate consequence of Hensel's lemma is that $p$-adically small perturbations
of the equations do  not affect the first approximations to their solution.
Further, the existence of liftable solutions is not affected by relatively large
perturbations.
This indicates a greater computational benefit  from
 the $2$-adic approach 
compared to the classical approach using computational complex  algebraic
geometry before discarding non-real solutions.

\begin{acknowledgements}
Sven Wursthorn is   thanked for the introduction into this fascinating topic,
and Boris Jutzi for multiple fruitful discussions.
\end{acknowledgements}



\end{document}